\theoremstyle{definition}
\newtheorem{definition}{Definition}
\newtheorem{theorem}{Theorem}
\newcommand{\notion}[1]{\textbf{\textit{#1}}}
\def\notaux{\notion}
\newcommand{\set}[1]{\left\{ #1 \right\}}
\newcommand{\tuple}[1]{\left\langle #1 \right\rangle}
\newcommand{\cond}[1]{\textit{(#1)}}
\newcommand{\argtype}[4][]{\mathtt{#2}#1(#3,#4)}
\renewcommand{\arg}[3][]{\argtype[_{#1}]{arg}{#2}{#3}}
\newcommand{\argsub}[3][]{\argtype[#1]{arg}{#2}{#3}}
\newcommand{\pro}[3][]{\argtype[_{#1}]{pro}{#2}{#3}}
\newcommand{\cau}[3][]{\argtype[_{#1}]{cau}{#2}{#3}}
\newcommand{\per}[3][]{\argtype[_{#1}]{per}{#2}{#3}}
\newcommand{\prc}[3][]{\argtype[_{#1}]{prec}{#2}{#3}}
\newcommand{\persimple}[2][]{\argtype[_{#1}]{per}{#2}{\cdot}}
\newcommand{\obs}[2]{\textsc{obs}(#1,#2)}
\def\narr{\mathcal{N}}
\def\wk{\mathcal{W}}
\def\story{\mathcal{SR}}
\def\f{\textsc{f}}
\def\b{\textsc{b}}
\def\G{\mathbb{G}}
\begin{document}

\nocopyright


\newcommand{\Affected}{\ensuremath{\mbox{\textit{Affected}}}}
\newcommand{\AntiTrajectory}{\ensuremath{\mbox{\textit{AntiTrajectory}}}}
\newcommand{\asort}{\ensuremath{{\cal A}}}
\newcommand{\Breaks}{\ensuremath{\mbox{\textit{Breaks}}}}
\newcommand{\BreaksTo}{\ensuremath{\mbox{\textit{BreaksTo}}}}
\newcommand{\Cancelled}{\ensuremath{\mbox{\textit{Cancelled}}}}
\newcommand{\Cancels}{\ensuremath{\mbox{\textit{Cancels}}}}
\newcommand{\CausesValue}{\ensuremath{\mbox{\textit{CausesValue}}}}
\newcommand{\CIRC}{\ensuremath{\mbox{\textit{CIRC}}}}
\newcommand{\Clipped}{\ensuremath{\mbox{\textit{Clipped}}}}
\newcommand{\Continuous}{\ensuremath{\mbox{\textit{Continuous}}}}
\newcommand{\Declipped}{\ensuremath{\mbox{\textit{Declipped}}}}
\newcommand{\defin}{\ensuremath{\stackrel{\mbox{{\tiny def}}}{\equiv}}}
\newcommand{\DifferAfterAtMostBy}{\ensuremath{\mbox{\textit{DifferAfterAtMostBy}}}}
\newcommand{\DifferAfterOnlyBy}{\ensuremath{\mbox{\textit{DifferAfterOnlyBy}}}}
\newcommand{\Differentiable}{\ensuremath{\mbox{\textit{Differentiable}}}}
\newcommand{\DifferInitiallyAtMostBy}{\ensuremath{\mbox{\textit{InitiallyDifferAtMostBy}}}}
\newcommand{\DifferInitiallyOnlyBy}{\ensuremath{\mbox{\textit{DifferAfterOnlyBy}}}}
\newcommand{\Dur}{\ensuremath{\mbox{\textit{Dur}}}}
\newcommand{\EqualUpTo}{\ensuremath{\mbox{\textit{EqualUpTo}}}}
\newcommand{\false}{\ensuremath{\mbox{\textit{false}}}}
\newcommand{\False}{\ensuremath{\mbox{\textit{False}}}}
\newcommand{\Frame}{\ensuremath{\mbox{\textit{Frame}}}}
\newcommand{\fsort}{\ensuremath{{\cal F}}}
\newcommand{\gequ}{\ensuremath{\widetilde{=}}}
\newcommand{\gneg}{\ensuremath{\widetilde{\neg}}}
\newcommand{\Goal}{\ensuremath{\mbox{\textit{Goal}}}}
\newcommand{\gsort}{\ensuremath{{\cal G}}}
\newcommand{\gvee}{\ensuremath{\widetilde{\vee}}}
\newcommand{\gwedge}{\ensuremath{\widetilde{\wedge}}}
\newcommand{\Happens}{\ensuremath{\mbox{\textit{Happens}}}}
\newcommand{\HoldsAt}{\ensuremath{\mbox{\textit{HoldsAt}}}}
\newcommand{\HoldsForm}{\ensuremath{\mbox{\textit{HoldsFormula}}}}
\newcommand{\HoldsIn}{\ensuremath{\mbox{\textit{HoldsIn}}}}
\newcommand{\IdenticalUpToOccurrencesOf}{\ensuremath{\mbox{\textit{IdenticalUpToOccurrencesOf}}}}
\newcommand{\IfFalse}{\ensuremath{\mbox{\textit{IfFalse}}}}
\newcommand{\IfTrue}{\ensuremath{\mbox{\textit{IfTrue}}}}
\newcommand{\Impossible}{\ensuremath{\mbox{\textit{Impossible}}}}
\newcommand{\InitiallyDifferAtMostBy}{\ensuremath{\mbox{\textit{InitiallyDifferAtMostBy}}}}
\newcommand{\InitiallyFalse}{\ensuremath{\mbox{\textit{InitiallyFalse}}}}
\newcommand{\InitiallyN}{\ensuremath{\mbox{\textit{InitiallyN}}}}
\newcommand{\InitiallyP}{\ensuremath{\mbox{\textit{InitiallyP}}}}
\newcommand{\InitiallyTrue}{\ensuremath{\mbox{\textit{InitiallyTrue}}}}
\newcommand{\Initiates}{\ensuremath{\mbox{\textit{Initiates}}}}
\newcommand{\isort}{\ensuremath{{\cal I}}}
\newcommand{\KnowsFalse}{\ensuremath{\mbox{\textit{KnowsFalse}}}}
\newcommand{\KnowsHappens}{\ensuremath{\mbox{\textnormal{\textit{KnowsHappens}}}}}
\newcommand{\KnowsNotHappens}{\ensuremath{\mbox{\textnormal{\textit{KnowsNotHappens}}}}}
\newcommand{\KnowsIfHappens}{\ensuremath{\mbox{\textnormal{\textit{KnowsIfHappens}}}}}
\newcommand{\KnowsHoldsForm}{\ensuremath{\mbox{\textit{KnowsHoldsFormula}}}}
\newcommand{\KnowsNotHoldsForm}{\ensuremath{\mbox{\textit{KnowsNotHoldsFormula}}}}
\newcommand{\KnowsTrue}{\ensuremath{\mbox{\textit{KnowsTrue}}}}
\newcommand{\KnowsValue}{\ensuremath{\mbox{\textnormal{\textit{KnowsValue}}}}}
\newcommand{\KnowsValueIs}{\ensuremath{\mbox{\textnormal{\textit{KnowsValueIs}}}}}
\newcommand{\KnowsValueIsNot}{\ensuremath{\mbox{\textnormal{\textit{KnowsValueIsNot}}}}}
\newcommand{\KnowsValueWasIsOrWillBe}{\ensuremath{\mbox{\textit{KnowsValueWasIsOrWillBe}}}}
\newcommand{\KnowsValueWasntIsntOrWontBe}{\ensuremath{\mbox{\textit{KnowsValueWasntIsntOrWontBe}}}}
\newcommand{\KnowsWhether}{\ensuremath{\mbox{\textit{KnowsWhether}}}}
\newcommand{\LeftContinuous}{\ensuremath{\mbox{\textit{LeftContinuous}}}}
\newcommand{\MacClipped}{\ensuremath{\mbox{\textbf{Clipped}}}}
\newcommand{\MacDeclipped}{\ensuremath{\mbox{\textbf{Declipped}}}}
\newcommand{\MacIdenticalUpToOccurrencesOf}{\ensuremath{\mbox{\textbf{IdenticalUpToOccurrencesOf}}}}
\newcommand{\MacKnowsFalse}{\ensuremath{\mbox{\textbf{KnowsFalse}}}}
\newcommand{\MacKnowsHappens}{\ensuremath{\mbox{\textbf{KnowsHappens}}}}
\newcommand{\MacKnowsHappensIfFalse}{\ensuremath{\mbox{\textbf{KnowsHappensIfFalse}}}}
\newcommand{\MacKnowsHappensIfTrue}{\ensuremath{\mbox{\textbf{KnowsHappensIfTrue}}}}
\newcommand{\MacKnowsTrue}{\ensuremath{\mbox{\textbf{KnowsTrue}}}}
\newcommand{\MacKnowsWhether}{\ensuremath{\mbox{\textbf{KnowsWhether}}}}
\newcommand{\Next}{\ensuremath{\mbox{\textit{Next}}}}
\newcommand{\Occurred}{\ensuremath{\mbox{\textit{Occurred}}}}
\newcommand{\Occurs}{\ensuremath{\mbox{\textit{Occurs}}}}
\newcommand{\Perform}{\ensuremath{\mbox{\textit{Perform}}}}
\newcommand{\PerformIfKnownFalse}{\ensuremath{\mbox{\textit{PerformIfKnownFalse}}}}
\newcommand{\PerformIfKnownTrue}{\ensuremath{\mbox{\textit{PerformIfKnownTrue}}}}
\newcommand{\PerformIfKnowsHoldsForm}{\ensuremath{\mbox{\textit{PerformIfKnowsHoldsFormula}}}}
\newcommand{\PerformIfKnowsNotHoldsForm}{\ensuremath{\mbox{\textit{PerformIfKnowsNotHoldsFormula}}}}
\newcommand{\PerformIfValueKnownIs}{\ensuremath{\mbox{\textit{PerformIfValueKnownIs}}}}
\newcommand{\PerformNonDet}{\ensuremath{\mbox{\textit{PerformNonDet}}}}
\newcommand{\PersistsBetween}{\ensuremath{\mbox{\textit{PersistsBetween}}}}
\newcommand{\Plan}{\ensuremath{\mbox{\textit{Plan}}}}
\newcommand{\Poss}{\ensuremath{\mbox{\textit{Poss}}}}
\newcommand{\PossVal}{\ensuremath{\mbox{\textit{PossVal}}}}
\newcommand{\ReleasedAt}{\ensuremath{\mbox{\textit{ReleasedAt}}}}
\newcommand{\ReleasedBetween}{\ensuremath{\mbox{\textit{ReleasedBetween}}}}
\newcommand{\Releases}{\ensuremath{\mbox{\textit{Releases}}}}
\newcommand{\RightLimit}{\ensuremath{\mbox{\textit{RightLimit}}}}
\newcommand{\rsort}{\ensuremath{\mathbb{R}_{\geq0}}}
\newcommand{\Sense}{\ensuremath{\mbox{\textit{Sense}}}}
\newcommand{\Senses}{\ensuremath{\mbox{\textit{Senses}}}}
\newcommand{\StartedIn}{\ensuremath{\mbox{\textit{StartedIn}}}}
\newcommand{\StoppedIn}{\ensuremath{\mbox{\textit{StoppedIn}}}}
\newcommand{\Terminates}{\ensuremath{\mbox{\textit{Terminates}}}}
\newcommand{\Theory}{\ensuremath{\mbox{\textit{Theory}}}}
\newcommand{\Trajectory}{\ensuremath{\mbox{\textit{Trajectory}}}}
\newcommand{\Triggered}{\ensuremath{\mbox{\textit{Triggered}}}}
\newcommand{\true}{\ensuremath{\mbox{\textit{true}}}}
\newcommand{\True}{\ensuremath{\mbox{\textit{True}}}}
\newcommand{\tsort}{\ensuremath{{\cal T}}}
\newcommand{\Value}{\ensuremath{\mbox{\textit{Value}}}}
\newcommand{\ValueAltered}{\ensuremath{\mbox{\textit{ValueAltered}}}}
\newcommand{\ValueAt}{\ensuremath{\mbox{\textit{ValueAt}}}}
\newcommand{\ValueCaused}{\ensuremath{\mbox{\textit{ValueCaused}}}}
\newcommand{\ValueOf}{\ensuremath{\mbox{\textit{ValueOf}}}}
\newcommand{\vsort}{\ensuremath{{\cal V}}}
\newcommand{\wsort}{\ensuremath{{\cal W}}}


\newcommand{\Alarm}{\ensuremath{\mbox{\textit{Alarm}}}}
\newcommand{\BlinkFunction}{\ensuremath{\mbox{\textit{BlinkFunction}}}}
\newcommand{\Blue}{\ensuremath{\mbox{\textit{Blue}}}}
\newcommand{\Bought}{\ensuremath{\mbox{\textit{Bought}}}}
\newcommand{\Broken}{\ensuremath{\mbox{\textit{Broken}}}}
\newcommand{\Cancel}{\ensuremath{\mbox{\textit{Cancel}}}}
\newcommand{\CloseValve}{\ensuremath{\mbox{\textit{CloseValve}}}}
\newcommand{\ColA}{\ensuremath{\mbox{\textit{Col}}_a}}
\newcommand{\Collected}{\ensuremath{\mbox{\textit{Collected}}}}
\newcommand{\CollectFromBlue}{\ensuremath{\mbox{\textit{CollectFromBlue}}}}
\newcommand{\CollectFromGreen}{\ensuremath{\mbox{\textit{CollectFromGreen}}}}
\newcommand{\CollectFromRed}{\ensuremath{\mbox{\textit{CollectFromRed}}}}
\newcommand{\CollectionPoint}{\ensuremath{\mbox{\textit{CollectionPoint}}}}
\newcommand{\ColM}{\ensuremath{\mbox{\textit{Col}}_m}}
\newcommand{\Countdown}{\ensuremath{\mbox{\textit{Countdown}}}}
\newcommand{\DieFaceShowing}{\ensuremath{\mbox{\textit{DieFaceShowing}}}}
\newcommand{\Display}{\ensuremath{\mbox{\textit{Display}}}}
\newcommand{\GateStatus}{\ensuremath{\mbox{\textit{GateStatus}}}}
\newcommand{\Flow}{\ensuremath{\mbox{\textit{Flow}}}}
\newcommand{\GiveFreeGift}{\ensuremath{\mbox{\textit{GiveFreeGift}}}}
\newcommand{\GoThrough}{\ensuremath{\mbox{\textit{GoThrough}}}}
\newcommand{\Green}{\ensuremath{\mbox{\textit{Green}}}}
\newcommand{\Happy}{\ensuremath{\mbox{\textit{Happy}}}}
\newcommand{\HasGrabber}{\ensuremath{\mbox{\textit{HasGrabber}}}}
\newcommand{\HasKey}{\ensuremath{\mbox{\textit{HasKey}}}}
\newcommand{\HeadsUp}{\ensuremath{\mbox{\textit{HeadsUp}}}}
\newcommand{\HeaterOn}{\ensuremath{\mbox{\textit{HeaterOn}}}}
\newcommand{\Height}{\ensuremath{\mbox{\textit{Height}}}}
\newcommand{\IndicatorDepressed}{\ensuremath{\mbox{\textit{IndicatorDepressed}}}}
\newcommand{\Insert}{\ensuremath{\mbox{\textit{Insert}}}}
\newcommand{\Inside}{\ensuremath{\mbox{\textit{Inside}}}}
\newcommand{\Level}{\ensuremath{\mbox{\textit{Level}}}}
\newcommand{\LiftLeft}{\ensuremath{\mbox{\textit{LiftLeft}}}}
\newcommand{\LiftRight}{\ensuremath{\mbox{\textit{LiftRight}}}}
\newcommand{\Light}{\ensuremath{\mbox{\textit{Light}}}}
\newcommand{\LightOn}{\ensuremath{\mbox{\textit{LightOn}}}}
\newcommand{\Location}{\ensuremath{\mbox{\textit{Location}}}}
\newcommand{\Locked}{\ensuremath{\mbox{\textit{Locked}}}}
\newcommand{\Lower}{\ensuremath{\mbox{\textit{Lower}}}}
\newcommand{\MotorEngaged}{\ensuremath{\mbox{\textit{MotorEngaged}}}}
\newcommand{\MoveEast}{\ensuremath{\mbox{\textit{MoveEast}}}}
\newcommand{\MovingEast}{\ensuremath{\mbox{\textit{MovingEast}}}}
\newcommand{\Off}{\ensuremath{\mbox{\textit{Off}}}}
\newcommand{\On}{\ensuremath{\mbox{\textit{On}}}}
\newcommand{\Open}{\ensuremath{\mbox{\textit{Open}}}}
\newcommand{\OpenValve}{\ensuremath{\mbox{\textit{OpenValve}}}}
\newcommand{\Pickup}{\ensuremath{\mbox{\textit{Pickup}}}}
\newcommand{\PressDoorBell}{\ensuremath{\mbox{\textit{PressDoorBell}}}}
\newcommand{\PressSwitch}{\ensuremath{\mbox{\textit{PressSwitch}}}}
\newcommand{\Pull}{\ensuremath{\mbox{\textit{Pull}}}}
\newcommand{\PurA}{\ensuremath{\mbox{\textit{Pur}}\!_a}}
\newcommand{\Purchase}{\ensuremath{\mbox{\textit{Purchase}}}}
\newcommand{\PurM}{\ensuremath{\mbox{\textit{Pur}}\!_m}}
\newcommand{\Push}{\ensuremath{\mbox{\textit{Push}}}}
\newcommand{\Raised}{\ensuremath{\mbox{\textit{Raised}}}}
\newcommand{\Red}{\ensuremath{\mbox{\textit{Red}}}}
\newcommand{\Removed}{\ensuremath{\mbox{\textit{Removed}}}}
\newcommand{\Ringing}{\ensuremath{\mbox{\textit{Ringing}}}}
\newcommand{\RingingNoise}{\ensuremath{\mbox{\textit{RingingNoise}}}}
\newcommand{\RollDie}{\ensuremath{\mbox{\textit{RollDie}}}}
\newcommand{\Set}{\ensuremath{\mbox{\textit{Set}}}}
\newcommand{\Shut}{\ensuremath{\mbox{\textit{Shut}}}}
\newcommand{\Smoke}{\ensuremath{\mbox{\textit{Smoke}}}}
\newcommand{\Spilt}{\ensuremath{\mbox{\textit{Spilt}}}}
\newcommand{\StartMoveEast}{\ensuremath{\mbox{\textit{StartMoveEast}}}}
\newcommand{\StartRing}{\ensuremath{\mbox{\textit{StartRing}}}}
\newcommand{\StopMoveEast}{\ensuremath{\mbox{\textit{StopMoveEast}}}}
\newcommand{\SwitchedOn}{\ensuremath{\mbox{\textit{SwitchedOn}}}}
\newcommand{\TailsUp}{\ensuremath{\mbox{\textit{TailsUp}}}}
\newcommand{\Tick}{\ensuremath{\mbox{\textit{Tick}}}}
\newcommand{\Tock}{\ensuremath{\mbox{\textit{Tock}}}}
\newcommand{\TossCoin}{\ensuremath{\mbox{\textit{TossCoin}}}}
\newcommand{\TossHead}{\ensuremath{\mbox{\textit{TossHead}}}}
\newcommand{\TossTail}{\ensuremath{\mbox{\textit{TossTail}}}}
\newcommand{\ValveOpen}{\ensuremath{\mbox{\textit{ValveOpen}}}}

\title{Non-Monotonic Reasoning and Story Comprehension\\{\small In Proceedings of the 15th International Workshop on Non-Monotonic Reasoning (NMR 2014), Vienna, 17--19 July, 2014}}
\author{Irene-Anna Diakidoy\\
University of Cyprus\\
eddiak@ucy.ac.cy
\And
Antonis Kakas\\
University of Cyprus\\
antonis@cs.ucy.ac.cy
\And
Loizos Michael\\
Open University of Cyprus\\
loizos@ouc.ac.cy
\And
Rob Miller\\
University College London\\
rsm@ucl.ac.uk
}

\maketitle

\begin{abstract}\label{abstract}
This paper develops a Reasoning about Actions and Change
framework integrated with Default Reasoning,
suitable as a Knowledge Representation and Reasoning framework
for Story Comprehension.
The proposed framework, which is guided strongly by existing knowhow
from the Psychology of Reading and Comprehension, is based on the theory of
argumentation from AI. It uses argumentation to capture appropriate solutions to the frame, ramification and
qualification problems and generalizations of these problems required
for text comprehension. In this first
part of the study the work concentrates on the central problem of
integration (or elaboration) of the explicit information from the narrative in the text
with the implicit (in the reader's mind) common sense world knowledge
pertaining to the topic(s) of the story given in the text.
We also report on our empirical efforts to gather background
common sense world knowledge used by humans when reading a story
and to evaluate, through a prototype system, the ability of our
approach to capture both the majority and the variability of
understanding of a story by the human readers in the experiments.


\end{abstract}

\section{Introduction}

Text comprehension has long been identified as a key test for Artificial Intelligence (AI). Aside from its central position in many forms of the Turing Test, it is clear that human computer interaction could benefit enormously from this and other forms of natural language processing. The rise of computing over the Internet, where so much data is in the form of textual information, has given even greater importance to this topic.
This paper reports on a research program aiming to learn from the (extensive) study of text comprehension in Psychology in order to draw guidelines for developing frameworks for automating narrative text comprehension and in particular, \textit{story comprehension} (SC).

Our research program brings together knowhow from Psychology and AI, in particular, our understanding of Reasoning about Actions and Change and Argumentation in AI, to provide a formal framework of representation and a computational framework for SC, that can be empirically evaluated and iteratively developed given the results of the evaluation. This empirical evaluation, which forms an important part of the program, is based on the following methodology: (i) set up a set of stories and a set of questions to test different aspects of story comprehension; (ii) harness the world knowledge on which human readers base their comprehension; (iii) use this world knowledge in our framework and automated system and compare its comprehension behaviour with that of the source of the world knowledge.

In this paper we will concentrate on the development of an appropriate Reasoning about Actions and Change and Default Reasoning framework for representing narratives extracted from stories together with the background \emph{world knowledge} needed for the underlying central process for story comprehension of \emph{synthesizing and elaborating} the explicit text information with new inferences through the implicit world knowledge of the reader.
In order to place this specific consideration in the overall process of story comprehension we
present here a brief summary of the problem of story comprehension from the psychological point of view.

\subsection{A Psychological Account of Story Comprehension}

Comprehending text entails the construction of a mental representation of the information contained in the text. However, no text specifies clearly and completely all the implications of text ideas or the relations between them. Therefore, comprehension depends on the ability to mentally represent the text-given information and to generate \textbf{bridging and elaborative inferences} that connect and elaborate text ideas resulting in a mental or \textbf{comprehension model} of the story. Inference generation is necessary in order to comprehend any text as a whole, i.e., as a single network of interconnected propositions instead of as a series of isolated sentences, and to appreciate the suspense and surprise that characterize narrative texts or stories, in particular \cite{BrewerLichtenstein1982,McNamaraMagliano2009}.

Although inference generation is based on the activation of background \textbf{world knowledge}, the process is constrained by text information. Concepts encountered in the text activate related conceptual knowledge in the readers' long-term memory \cite{Kintsch1988}. In the case of stories, knowledge about mental states, emotions, and motivations is also relevant as the events depicted tend to revolve around them.
Nevertheless, at any given point in the process, only a small subset of all the possible knowledge-based inferences remain activated and become part of the mental representation: those that connect and elaborate text information in a way that contributes to the \textbf{coherence} of the mental model \cite{McNamaraMagliano2009,RappVanDenBroek2005}. Inference generation is a task-oriented process that follows the principle of \textbf{cognitive economy} enforced by a limited-resource cognitive system.

However, the results of this coherence-driven selection mechanism can easily exceed the limited working memory capacity of the human cognitive system. Therefore, coherence on a more global level is achieved through \textbf{higher-level integration} processes that operate to create macro-propositions that generalize or subsume a number of text-encountered concepts and the inferences that connected them. In the process, previously selected information that maintains few connections to other information is dropped from the mental model. This results in a more consolidated network of propositions that serves as the new anchor for processing subsequent text information \cite{Kintsch1998}.

Comprehension also requires an iterative general \textbf{revision mechanism} of the mental model that readers construct. The feelings of suspense and surprise that stories aim to create are achieved through discontinuities or changes (in settings, motivations, actions, or consequences) that are not predictable or are wrongly predictable solely on the basis of the mental model created so far. Knowledge about the structure and the function of stories leads readers to expect discontinuities and to use them as triggers to revise their mental model \cite{Z1994}. Therefore, a change in time or setting in the text may serve as a clue for revising parts of the mental model while other parts remain and integrated with subsequent text information.

The interaction of bottom-up and top-down processes for the purposes of coherence carries the possibility of \textbf{different} but equally legitimate or \textbf{successful comprehension} outcomes. Qualitative and quantitative differences in conceptual and mental state knowledge can give rise to differences between the mental models constructed by different readers. Nevertheless, comprehension is successful if these are primarily differences in elaboration but not in the level of coherence of the final mental model.

In this paper we will focus on the underlying lower-level task of constructing the possibly additional elements of the comprehension model and the process of revising these elements as the story unfolds with only a limited concern on the global requirements of coherence and cognitive economy. Our working hypothesis is that these higher level features of comprehension can be tackled on top of the underlying framework that we are developing in this paper, either at the level of the representational structures and language or with additional computational processes on top of the underlying computational framework defined in this paper. We are also
assuming as solved the orthogonal issue of correctly parsing the natural language of the text into some information-
equivalent structured (e.g., logical) form that gives us the explicit narrative of the story. This is not to say that this issue is not an important element of narrative text comprehension. Indeed, it may need to be tackled in conjunction with the problems on which we are focusing (since, for example, the problem of de-referencing pronoun and article anaphora could depend on background world knowledge and hence possibly on the higher-level whole comprehension of the text \cite{Winograd2012}.

In the next two sections we will develop an appropriate representation framework using preference based argumentation that enables us to address well all the three major problems of \emph{frame, ramification and qualification} and provide an associated revision process. The implementation of a system discussed after this shows how psychologically-inspired story comprehension can proceed as a sequence of elaboration and revision. The paper then presents, using the empirical methodology suggested by research in psychology, our initial efforts to evaluate how closely the inferences drawn by our framework and system match those given by humans engaged in a story comprehension task.

The following story will be used as a running example.

\noindent
\textit{\textbf{Story:} It was the night of Christmas Eve. After feeding the animals and cleaning the barn,
Papa Joe took his shotgun from above the fireplace and sat out on the porch
cleaning it. He had had this shotgun since he was young, and it had never failed him,
always making a loud noise when it fired.}

\textit{Papa Joe woke up early at dawn, picked up his shotgun and went off to forest.
He walked for hours, until the sight of two turkeys in the distance made him
stop suddenly. A bird on a tree nearby was cheerfully chirping away,
building its nest. He aimed at the first turkey, and pulled the trigger.}

\textit{After a moment's thought, he opened his shotgun and saw there were no bullets in the shotgun's chamber.
He loaded his shotgun, aimed at the turkey and pulled the trigger again.
Undisturbed, the bird nearby continued to chirp and build its nest.
Papa Joe was very confused. Would this be the first time that his shotgun had let him down?}

The story above along with other stories and material used for the
evaluation of our approach can be found at \texttt{http://cognition.ouc.ac.cy/narrative/}.

\section{KRR for Story Comprehension}

We will use methods and results from Argumentation
Theory in AI (e.g., \cite{Dung95,aspic2012}) and its links to the area of
Reasoning about Action and Change (RAC) with
Default Reasoning on the static properties
of domains (see~\cite{HarmelenEtAl2008} for an overview) to develop a
Knowledge Representation and Reasoning (KRR)
framework suitable for Story Comprehension (SC).
Our central premise is that SC can be formalized in
terms of argumentation accounting
for the qualification and
the revision of the inferences drawn as we
read a story.

The psychological research and understanding of SC
 will guide us in the way we exploit the
know how from AI. The close link between
human common sense reasoning, such as that for SC,
and argumentation has been recently re-enforced
by new psychological evidence \cite{sperber}
suggesting that human reasoning
is in its general form inherently argumentative.
In our proposed approach of KRR for SC
the reasoning to construct a comprehension model
and its qualification at all levels as the story unfolds
will be captured through a
uniform acceptability requirement on the arguments
that support the conclusions in the model.

The significance of this form of representation for SC is that
it makes easy the elaboration of new inferences from the explicit
information in the narrative, that, as we discussed in the
introduction, is crucially necessary for the successful comprehension of stories.
On the other hand, this easy form of elaboration
and the extreme form of qualification that it needs
can be mitigated by the requirement, again given from
the psychological perspective, that elaborative inferences
need to be \textit{grounded} on the narrative and
\textit{sceptical} in nature.
In other words, the psychological perspective of SC, that
also suggests that story comprehension is a process of
 \textit{``fast thinking''}, leads us to depart from a
standard logical view of drawing conclusions
based on the truth in all (preferred) models.
Instead, the emphasis is turned on building one
\emph{grounded and well-founded} model from a
collection of solid or sceptical
properties that are grounded on the text and
follow as unqualified conclusions.

We use a typical RAC language of Fluents, Actions, Times, with an extra sort of \notion{Actors}. An actor-action pair is an \notion{event}, and a fluent/event or its negation is a \notion{literal}. For this paper it suffices to represent times as natural numbers\footnote{In general, abstract time points called \notion{scenes} are useful.}
and to assume that time-points are dense between story elements to allow for the realization of indirect effects.
Arguments will be build from premises in the knowledge connected to any given story.
We will have three types of such knowledge units as premises or basic units of arguments.

\begin{definition}
 Let $L$ be a fluent literal, $X$ a fluent/event literal and $S$ a set of fluent/event literals.
 A \notion{unit argument or premise} has one of following forms:
 \begin{itemize}

 \item \notion{a unit property argument} $\pro{X}{S}$ or $\prc{X}{S}$;

 \item \notion{a unit causal argument} $\cau{X}{S}$;

 \item \notion{a unit persistence argument} $\per{L}{\set{L}}$ (which we sometimes write as $\persimple{L}$).

 \end{itemize}

 \noindent These three forms are called \notion{types} of unit arguments.
 A unit argument of any type is denoted by $\arg[i]{H_i}{B_i}$. The two forms of unit property arguments differ in that $\pro{X}{S}$ relates properties to each other at the same time-point, whereas $\prc{X}{S}$ aims to capture preconditions that hold at the time-point of an event, under which the event is blocked from bringing about its effects at the subsequent time-point.
\end{definition}

With abuse of terminology we will sometimes call these units of arguments, simply as arguments.

The knowledge required for the comprehension of a story comprises of two parts:
the explicit knowledge of the narrative extracted from the text of the story
and the implicit background knowledge that the reader uses along with the narrative
for elaborative inferences about the story.

\begin{definition}
A \notion{world knowledge theory} $\wk$ is a set of unit property and causal arguments together with a (partial) irreflexive priority relation on them. A \notion{narrative} $\narr$ is: a set of observations $\obs{X}{T}$ for a fluent/event literal $X$, and a time-point $T$; together with a (possibly empty)
set of (story specific) property or causal unit arguments.
\end{definition}

The priority relation in $\wk$ would typically
reflect the priority of specificity for properties,
expressed by unit property arguments $\pro{X}{S}$, or the priority of
precondition properties, expressed by unit property arguments $\prc{X}{S}$, over causal effects, expressed by unit causal arguments.
This priority amongst these basic units of
knowledge gives a form of non-monotonic reasoning (NMR) for deriving
new properties that hold in the story.

To formalize this NMR we use a form of preference-based argumentation uniformly to
capture the static (default) inference of properties at a single time point
as well as inferences between different type points,
by extending the domain specific priority relation to address the frame problem.

\begin{definition}
A \notion{story representation} $\story = \tuple{\wk,\narr,\succ}$ comprises a world knowledge theory $\wk$, a narrative $\narr$, and a (partial) irreflexive priority relation $\succ$ extending the one in $\wk$ so that: \cond{i} $\cau{H}{B_1} \succ \per{\neg H}{B_2}$; \cond{ii} $\per{H}{B_1} \succ \pro{\neg H}{B_2}$.
The extended relation $\succ$ may also prioritize between arguments in $\narr$ and those in $\wk$
(typically the former over the latter).
\end{definition}

The first priority condition, namely that causal arguments have
priority over persistence arguments, encompasses a solution to the \notion{frame problem}.
When we need to reason with defeasible
property information,
such as default rules about the normal
state of the world in which a story takes place,
we are also faced with a \notion{generalized frame problem},
where ``a state of the world persists irrespective of the
existence of general state laws''. Hence, if we are told
that the world is in fact in some exceptional state that
violates a general (default) property this will continue
to be the case in the future, until we learn of (or derive)
some causal information that returns the world into
its normal state. The solution to this
generalized frame problem is captured succinctly by the second general
condition on the priority relation of a story representation
and its combination with the first condition.

A representation $\story$ of our example story (focusing on its ending) may include the following unit arguments in $\wk$ and $\narr$ (where $pj$ is short for ``Papa Joe''):

\smallskip

\noindent{\small
$c1: \ \cau{fired\_at(pj,X)}{\set{aim(pj,X),pull\_trigger(pj)}}$\\
$c2: \ \cau{\neg alive(X)}{\set{fired\_at(pj,X),alive(X)}}$\\
$c3: \ \cau{noise}{\set{fired\_at(pj,X)}}$\\
$c4: \ \cau{\neg chirp(bird)}{\set{noise,nearby(bird)}}$\\
$c5: \ \cau{gun\_loaded}{\set{load\_gun}}$\\
$p1: \ \prc{\neg fired\_at(pj,X)}{\set{\neg gun\_loaded}}$\\
$p2: \ \pro{\neg fired\_at(pj,X)}{\set{\neg noise}}$ \hfill (story specific) \\
}

\smallskip

\noindent with $p1 \succ c1$, $p2 \succ c1$; and the following in $\narr$:

\smallskip

\noindent $\obs{alive(turkey)}{1}$, $\obs{aim(pj,turkey)}{1}$, $\obs{pull\_trigger(pj)}{1}$, $\obs{\neg gun\_loaded}{4}$, $\obs{load\_gun}{5}$, $\obs{pull\_trigger(pj)}{6}$, $\obs{chirp(bird)}{10}$, \hfill $\obs{nearby(bird)}{10}$,

\smallskip

\noindent with the exact time-point choices being inconsequential.

As we can see in this example the representation of common
sense world knowledge has the form of simple associations
between concepts in the language. This stems from a
key observation in psychology that typically all world knowledge
and irrespective of type is inherently default.
It is not in the form of an elaborate formal theory
of detailed definitions of concepts, but rather is better regarded
as a collection of relatively loose semantic associations
between concepts, reflecting typical rather than absolute information.
Thus knowledge need not be fully qualified at the representation level,
since it can be qualified via the reasoning process
by the relative strength of other (conflicting) associations
in the knowledge. In particular, as we will see below,
\textbf{endogenous qualification} will be tackled by the priority
relation in the theory and exogenous qualification
by this priority coupled with the requirement that
explicit narrative information forms, in effect,
non-defeasible arguments.

\subsection{Argumentation Semantics for Stories}

To give the semantics of any given story representation $\story$
we will formulate a corresponding
preference based argumentation
framework of the form $\tuple{Arguments, Disputes, Defences}$.
Arguments will be based
on sets of timed unit arguments.
Since we are required to reason about properties
over time, it is necessary that arguments populate
some connected subset of the time line.

\begin{definition}
Let $\story = \tuple{\wk,\narr,\succ}$ be a story representation.
A \notion{(unit) argument tuple} has the form
$\tuple{\arg{H}{B},T^h,d;(X,T)}$, where, $\arg{H}{B}$,
is a unit argument in $\story$, $X$ is a fluent/event literal,
$d \in \set{\f,\b}$ is an \notion{inference type} of either
forwards derivation or backwards derivation
by contradiction, and $T^h,T$ are time points.
$T^h$ refers to the time-point at which the head of the unit argument applies,
while $X$ and $T$ refer to the conclusion drawn using the unit argument
in the tuple.
An \notion{interpretation} $\Delta$ of $\story$ is then defined as a
a set of argument tuples.
We say $\Delta$ \notion{supports} a fluent/event literal, $X$, \notaux{at} $T$, if either $\tuple{\arg{H}{B},T^h,d;(X,T)} \in \Delta$ or $\obs{X}{T} \in \narr$. The notion of support is extended to hold on sets of timed literals.
\end{definition}

The inference process of how an argument tuple
supports a timed literal,
and thus is allowed to belong to an interpretation,
is made precise by the following definition.

\begin{definition}
Let $\Delta$ be an interpretation and $\tuple{\arg{H}{B},T^h,d;(X,T)}$ in $\Delta$ with $d=F$.
Then $\arg{H}{B}$ \notaux{applied at} $T^h$ \notion{forward derives} $X$ \notaux{at} $T$ \notaux{under} $\Delta$ iff $X = H$, $T=T^h$ and $\Delta$ supports $B$ at $T'$. The set $\set{\tuple{Y,T'}\mid Y \in B}$ is called the \notion{activation condition} for the derivation; $T' = T^h$ if $\arg{H}{B}$ is of the form $\pro{H}{B}$. $T'= T^h-1$ for the other argument types.\\
When $d=B$, $\arg{H}{B}$ \notaux{applied at} $T^h$ \notion{backward derives} $X$ \notaux{at} $T$ \notaux{under} $\Delta$ iff $\neg X \in B$ and $\Delta$ supports $\set{\neg H}$ at $T^h$ and $B \setminus \set{\neg X}$ at $T$. The set $\set{\tuple{\neg H,T^h}} \cup \set{\tuple{Y,T}\mid Y \in B \setminus \set{\neg X}}$ is the \notion{activation condition}; $T = T^h$ if $\arg{H}{B}$ is of the form $\pro{H}{B}$. $T= T^h-1$ for the other argument types.

\end{definition}

The framework thus includes
\notion{reasoning by contradiction}
with the defeasible world knowledge.
Although the psychological debate on the question to what extent
humans reason by contradiction, e.g., by contraposition,
(see, e.g., \cite{JLY08,Rips94}) is still ongoing it is natural
for a formal argumentation framework to capture this mode
of indirect reasoning (see, e.g., \cite{kmtcs13,kmjlc13}).
One of the main consequences of this is that
it gives a form of \notion{backwards persistence}, e.g.,
from an observation to support (but not necessarily conclude)
that the observed property holds also at previous
time points. An argument tuple of the form $\tuple{\persimple{L},T+1,\b;(\neg L, T)}$
captures the backwards persistence of $\neg L$ from time $T+1$ to $T$
using by contraposition the unit argument of persistence of $L$ from $T$ to $T+1$.
We also note that the separation of the inference type (e.g., forwards
and backwards) is known to be significant in
preference based argumentation \cite{aspic2012}. This will
be exploited when we consider the attacking between
arguments: their disputes and defences.

To reflect the suggestion by psychology
that inferences drawn by readers are
strongly tied to the story we require
that the activation conditions of argument tuples
must be eventually traced on the explicit information
in the narrative of the story representation.

\begin{definition} An interpretation $\Delta$ is \notion{grounded} on $\story$ iff there is a total ordering of $\Delta$ such that the activation condition of any tuple $\alpha \in \Delta$ is supported by the set of tuples that precede $\alpha$ in the ordering or by the narrative in $\story$.
\end{definition}

Hence in a grounded interpretation there can be no cycles in the tuples
that support their activation conditions and so these
will always end with tuples whose activation conditions will be supported
directly by the observations in the narrative of the story.

We can now define the argumentation framework corresponding to
any given story representation. The central task is to
capture through the argumentation semantics the non-monotonic
reasoning of linking the narrative to the defeasible
information in the world knowledge.  In particular, the
argumentation will need to capture the
\notion{qualification} problem, encompassed in this synthesis
of the narrative with the world knowledge,
both at the level of static reasoning at one time point
with default property arguments and at the level of temporal
projection from one time point to another.

\begin{definition}
Let $\story$ be a story representation. Then the
\notion{corresponding argumentation framework,
$\tuple{ARG^{\story},DIS^{\story}, DEF^{\story}}$} is
defined as follows:

\begin{itemize}

\item An argument, $A$, in $ARG^{\story}$ is any grounded
interpretation of $\story$.

\item Given an argument $A$ then $A$ is \notion{in conflict} with $\story$
iff there exists a tuple $\alpha = \tuple{\arg{H}{B},T^h,d;(X,T)}$ in $A$
such that $\obs{\neg X}{T} \in \narr$ of $\story$.

\item Given two arguments $A_1, A_2$ then these are \notion{in (direct) conflict} with each other iff
there exists a tuple $\alpha_2 = \tuple{\arg[2]{H_2}{B_2},T^h_2,d_2;(X_2,T_2)}$ in $A_2$
and a tuple $\alpha_1 = \tuple{\arg[1]{H_1}{B_1},T^h_1,d_1;(X_1,T_1)}$ in $A_1$
such that $X_1 = \neg X_2$, $T_1 = T_2$. Given two arguments $A_1, A_2$ then these are \notion{in indirect conflict} with each other iff
there exists a tuple $\alpha_2 = \tuple{\arg[2]{H_2}{B_2},T^h_2,d_2;(X_2,T_2)}$ in $A_2$
and a tuple $\alpha_1 = \tuple{\arg[1]{H_1}{B_1},T^h_1,d_1;(X_1,T_1)}$ in $A_1$
such that ($d_1 = \b$ or $d_2 = \b$) and $H_1 = \neg H_2$, $T^h_1 = T^h_2$.

\item Given two arguments $A_1, A_2$ then $A_2$ \notion{disputes} $A_1$ and
hence $(A_2,A_1) \in DIS^{\story}$ iff $A_2$ is in direct or indirect
conflict with $A_1$, and in the case of indirect conflict $d_1 = \b$ holds in the definition of indirect conflict above.

\item Argument $A_1$ \notion{undercuts} $A_2$ iff

\begin{itemize}
\item $A_1, A_2$ are in direct or indirect conflict
via $\alpha_1$ and $\alpha_2$,

\item when in direct conflict, there exists a tuple $\alpha_1' = \tuple{\argsub[_1']{H_1'}{B_1'},T^{h'}_1,d_1';(X_1',T_1')}$ in $A_1$
and a tuple $\alpha_2' = \tuple{\argsub[_2']{H_2'}{B_2'},T^{h'}_2,d_2';(X_2',T_2')}$ in
$A_2$ such that $\argsub[_1']{H_1'}{B_1'} \succ \argsub[_2']{H_2'}{B_2'}$ and
$T^{'}_1=T^{'}_2$ or $T^{h'}_1=T^{h'}_2$.

\item when in indirect conflict, then $\arg[1]{H_1}{B_1} \succ \arg[2]{H_2}{B_2}$
where $\arg[1]{H_1}{B_1}$ and $\arg[2]{H_2}{B_2}$ are the unit arguments
in $\alpha_1$ and $\alpha_2$ respectively.
\end{itemize}

\item Argument $A_1$ \notion{defends against} $A_2$ and hence $(A_1,A_2) \in DEF^{\story}$, iff
there exists a subset $A_2^{'} \subseteq A_2$ which is in minimal conflict with $A_1$
(i.e., no proper subset of $A_2^{'}$ is in conflict with $A_1$) and $A_1$ undercuts $A_2^{'}$.
\end{itemize}

\end{definition}

Several clarifying comments are in order.
Arguments that are in dispute are arguments that support some contrary conclusion at the
same time point and hence form counter-arguments for each other.
The use of contrapositive reasoning for backwards inference also means that
it is possible to have arguments that support
conclusions that are not contrary to each other but
whose unit arguments
have conflicting conclusions. For example,
in our running example we can use the causal unit
argument, $c1$, to forward derive $fired\_at(pj,X)$ and
the property argument $p1$ to backwards derive
$gun\_loaded$ from $\neg fired\_at(pj,X)$ and despite the
fact that the derived facts are not in conflict the
unit arguments used concern conflicting conclusions.
Hence such arguments are also considered to be in conflict
but instead of a direct conflict we say we have
an indirect conflict. Not all such indirect conflicts are important.
A dispute that results from an indirect conflict of a unit argument
used backwards on a unit argument that is used forwards does
not have any effect. Such cases are excluded from giving rise to disputes.

This complication in the definitions of conflicts and disputes results from the \emph{defeasible nature} of the world
knowledge and the fact we are
allowing reasoning by contradiction on such defeasible information.
These complications in fact stem from the fact that we are only
approximating the proof by contradiction reasoning, capturing this
indirectly through contraposition.
The study of this is beyond the
scope of this paper and the reader is referred to
the newly formulated Argumentation Logic
\cite{kmtcs13}.

Undercuts between arguments require that the undercutting
argument does so through a stronger unit or premise argument
than some unit argument in the argument that is undercut.
The defence relation is build out of undercuts
by applying an undercut on minimally conflicting
subsets of the argument which we are defending against.
Hence these two relations between arguments are asymmetric.
Note also that the stronger premise from the undercutting argument
does not necessarily need to come from the subset of the unit arguments
that supports the conflicting conclusion. Instead, it can come
from any part of the undercutting argument to undercut at
any point of the chain supporting the activation of the conflicting
conclusion. This, as we shall
illustrate below, is linked to how the framework addresses the
\notion{ramification problem} of
reasoning with actions and change.

The semantics of a story representation
is defined using the corresponding argumentation
framework as follows.

\begin{definition}
Let $\story$ be a story representation and
$\tuple{ARG^{\story},DIS^{\story}, DEF^{\story}}$ its corresponding argumentation
framework. An argument $\Delta$ is \notion{acceptable} in $\story$ iff

\begin{itemize}

\item $\Delta$ is not in conflict with $\story$ nor in direct conflict with $\Delta$.

\item No argument $A$ undercuts $\Delta$.

\item For any argument $A$ that minimally disputes $\Delta$, $\Delta$ defends against $A$.

\end{itemize}

Acceptable arguments are called
\notion{comprehension models} of $\story$.
Given a comprehension model $\Delta$, a timed fluent
literal $(X,T)$ is \notion{entailed by $\story$} iff
this is supported by $\Delta$.
\end{definition}

The above definition of comprehension model and story
entailment is of a sceptical form where, apart from
the fact that all conclusions must be ground on the
narrative, they must also not be non-deterministic in the
sense that there can not exist another comprehension model where
the negative conclusion is entailed.
Separating disputes and undercuts and identifying
defences with undercuts facilitates
this sceptical form of entailment.
Undercuts (see, e.g., \cite{aspic2012} for some recent discussion)
are strong counter-claims whose existence means that
the attacked set is inappropriate for
sceptical conclusions whereas disputes
are weak counter-claims that could be defended or
invalidated by extending the argument to undercut them back.
Also the explicit condition that an acceptable
argument should not be undercut
even if it can undercut back means that this definition
does not allow non-deterministic choices for arguments
that can defend themselves.

To illustrate the formal framework, how arguments are constructed
and how a comprehension of a story is formed through acceptable arguments
let us consider our example story starting
from the end of the second paragraph,
corresponding to time-points $1$-$3$ in the example narrative.
Note that the empty $\Delta$ supports $aim(pj,turkey)$ and $pull\_trigger(pj)$ at $1$.
Hence, $c1$ on $2$ forward activates $fired\_at(pj,turkey)$ at $2$ under the empty argument, $\Delta$.
 We can thus populate $\Delta$ with $\tuple{c1,2,\f;(fired\_at(pj,turkey),2)}$. Similarly, we can include $\tuple{\persimple{alive(turkey)},2,\f;(alive(turkey),2)}$ in the new $\Delta$. Under this latter $\Delta$, $c2$ on $3$ forward activates $\neg alive(turkey)$ at $3$, allowing us to further extend $\Delta$ with $\tuple{c2,3,\f;(\neg alive(turkey),3)}$. The resulting $\Delta$ is a grounded interpretation that supports $\neg alive(turkey)$ at $3$. It is based on this inference, that we expect readers to respond that the first turkey is dead, when asked about its status at this point, since no other argument grounded on the narrative (thus far) can support a qualification argument to this inference. Note also that we can include in $\Delta$
 the tuple $\tuple{p1,2,\b;(gun\_loaded,1)}$ to support, using backwards
 (contrapositive) reasoning with $p1$, the conclusion that the gun was loaded
 when it was fired at time $1$.

 Reading the first sentence of the third paragraph, we learn that $\obs{\neg gun\_loaded}{4}$. We now expect that this new piece of evidence will lead readers to revise their inferences as now we have an argument to support the conclusion $\neg fired\_at(pj, turkey)$ based on the stronger (qualifying) unit argument of $p1$. For this we need to support the activation condition of $p1$ at time $1$, i.e., to support $\neg gun\_loaded$ at $1$. To do this we can use the argument tuples:\\
\indent $\tuple{\persimple{gun\_loaded},4,\b;(\neg gun\_loaded, 3)}$\\
\indent $\tuple{\persimple{gun\_loaded},3,\b;(\neg gun\_loaded, 2)}$\\
\indent $\tuple{\persimple{gun\_loaded},2,\b;(\neg gun\_loaded, 1)}$\\
which support the conclusion that the gun was also unloaded before it was observed to be so. This uses $\persimple{gun\_loaded}$ contrapositively to backward activate the unit argument of persistence, e.g.,
had the gun been loaded at $3$,
it would have been so at $4$
which would contradict the story.
Note that this backwards inference of
$\neg gun\_loaded$ would be
qualified by a causal argument for
$\neg gun\_loaded$ at any time earlier than $4$, e.g.,
if the world knowledge contained the unit argument

$c: \ \cau{\neg gun\_loaded}{\set{pull\_trigger(pj)}}$

\noindent This then supports an indirect conflict at time $2$
with the forwards persistence of $gun\_loaded$ from $1$ to $2$ and
due to the stronger nature of unit causal over persistence arguments
the backwards inference of $\neg gun\_loaded$ is undercut and so cannot belong to
an acceptable argument.

Assuming that $c$ is absent, the argument, $\Delta_1$, consisting of these three
``persistence'' tuples is in conflict on $gun\_loaded$ on $1$
with the argument $\Delta$ above.
Each argument disputes the other and in fact neither
can form an acceptable argument. If we extend
$\Delta_1$ with the tuple $\tuple{p1,2,\f;(\neg fired\_at(pj,turkey),2)}$
then this can now undercut and thus defend against
$\Delta$ using the priority of $p1$ over $c1$.
Therefore the extended $\Delta_1$ is acceptable
and the conclusion $\neg fired\_at(pj,turkey)$ at $2$
is drawn revising the previous conclusions drawn
from $\Delta$.
The process of understanding our story may then proceed by extending $\Delta_1$, with  $\tuple{\persimple{alive(turkey)},T,\f;(alive(turkey),T)}$ for $T=2,3,4$, resulting in a model that supports $alive(turkey)$ at $4$.
It is based on this inference that we expect readers to respond that the first turkey is alive at $4$.

Continuing with the story,
after Papa Joe loads the gun and fires again, we can support by forward inferences that the gun fired, that noise was caused, and that the bird stopped chirping, through a chaining of the unit arguments $c1,c3,c4$. But $\obs{chirp(bird)}{10}$ supports disputes on all these through the repeated backwards use of the same unit arguments grounded on this observation. We thus have an \textbf{exogenous qualification} effect where these conclusions can not be sceptical and so will not be supported by any comprehension model.
But if we also consider the stronger (story specific) information in $p2$, that this
gun does not fire without a noise, together with the backwards inference of $\neg noise$
an argument that contains these can undercut the firing of the gun at time $2$
and thus defend against disputes
that are grounded on $pull\_triger$ at $1$ and
the gun firing.
As a result, we have the effect of blocking the ramification of the causation of $noise$ and so $\neg noise$ (as well as $\neg fired\_at(pj,turkey)$) are sceptically concluded. Readers, indeed respond in this way.

With this latter part of the example story we see how our framework addresses the
\emph{ramification problem} and its non-trivial interaction
with the qualification problem \cite{ThielscherQual01}.
In fact, a generalized form of this problem is addressed
where the ramifications are not chained only through causal laws but through
any of the forms of inference we have in the framework ---
causal, property or persistence --- and through any of the type of
inference --- forwards or backwards by contradiction.

A comprehension model can be tested, as is often done in psychology,
through a series of multiple-choice questions.

\begin{definition}
Let $M$ be a comprehension model of a story representation
$\story$. A possible answer,``$X$ at $T$'',
to a question is \notion{accepted},
respectively \notion{rejected},
iff ``$X$ at $T$'' (respectively ``$\neg X$ at $T$'')
is supported by $M$. Otherwise, we say that
the question is \notion{allowed or possible} by $M$.
\end{definition}

In some cases, we may want to extend the notion
of a comprehension model to allow some non-sceptical
entailments. This is needed to reflect
the situation when a reader cannot find a sceptical answer
to a question and chooses between
two or more allowed answers. This can be captured
by allowing each such answer to be supported by a more general notion
of acceptability such as the admissibility
criterion of argumentation semantics.
For this, we can drop the condition that $\Delta$ is not undercut by any argument and
allow weaker defences, through disputes, to defend back
on a dispute that is not at the same time an undercut.

Finally, we note that a comprehension
model need not be complete as it
does not need to contain all possible
sceptical conclusions that can be drawn from the
narrative and the entire world knowledge. It is a
\emph{subset} of this,
given by the subset of the available world knowledge
that readers choose to use.
This incompleteness of the comprehension model is
required for important cognitive
economy and coherence properties of
comprehension, as trivially a ``full
model'' is contrary to the notion of coherence.

\section{Computing Comprehension Models}

\def\D{\Delta}
\def\P{\Pi}
\def\s{\story}
\def\E{E}

The computational procedure below constructs a comprehension model, by iteratively reading a new part of the story $\s$, retracting existing inferences that are no longer appropriate, and including new inferences that are triggered as a result of the new story part. Each part of the story may include more than one observation, much in the same way that human readers may be asked to read multiple sentences in the story before being asked to answer a question. We shall call each story part of interest a \notion{block}, and shall assume that it is provided as input to the computational procedure.

\begin{algorithm}[t]
\caption{Computing a Comprehension Model}
\label{Algorithm: Computing a Comprehension Model}%
\begin{algorithmic}
\STATE \textbf{input:} story $\s$, partitioned in a list of $k$ blocks, and a set of questions $Q[b]$ associated with each $\s$ block $b$.
\STATE Set $\G[0]$ to be the empty graph.
\FOR{every $b = 1, 2, \ldots, k$}
    \STATE Let $\s[b]$ be the restriction of $\s$ up to its $b$-th block.
    \STATE Let $\G[b] := graph(\G[b-1],\s[b])$ be the new graph.
    \STATE Let $\P[b] := retract(\D[b-1],\G[b],\s[b])$.
    \STATE Let $\D[b] := elaborate(\P[b],\G[b],\s[b])$.
    \STATE Answer $Q[b]$ with the comprehension model $\D[b]$.
\ENDFOR
\end{algorithmic}
\end{algorithm}

At a high level the procedure proceeds as in Algorithm~\ref{Algorithm: Computing a Comprehension Model}. The story is read one block at a time. After each block of $\s$ is read, a directed acyclic graph $\G[b]$ is maintained which succinctly encodes all interpretations that are relevant for $\s$ up to its $b$-th block. Starting from $\G[b-1]$, a new tuple is added as a vertex if it is possible to add a directed edge to each $\tuple{X,T}$ in the tuple's condition from either an observation $\obs{X}{T}$ in the narrative of $\s[b]$, or from a tuple $\tuple{\arg{H}{B},T^h,d;(X,T)}$ already in $\G[b]$. In effect, then, edges correspond to the notion of support from the preceding section, and the graph is the maximal grounded interpretation given the part of the story read.

Once graph $\G[b]$ is computed, it is used to revise the comprehension model $\D[b-1]$ so that it takes into account the observations in $\s[b]$. The revision proceeds in two steps.

In the first step, the tuples in $\D[b-1]$ are considered in the order in which they were added, and each one is checked to see whether it should remain in the comprehension model. Any tuple in $\D[b-1]$ that is undercut by the tuples in $\G[b]$, or disputed and cannot be defended, is retracted, and is not included in the provisional set $\P[b]$. As a result of a retraction, any tuple $\tuple{\arg{H}{B},T^h,d;(X,T)} \in \D[b-1]$ such that $\arg{H}{B}$ no longer activates $X$ at $T$ under $\P[b]$ is also retracted and is not included in $\P[b]$. This step guarantees that the argument $\P[b]$ is trivially acceptable.

In the second step, the provisional set $\P[b]$, which is itself a comprehension model (but likely a highly incomplete one), is elaborated with new inferences that follow. The elaboration process proceeds as in Algorithm~\ref{Algorithm: Elaborating a Comprehension Model}. Since the provisional comprehension model $\P$ effectively includes only unit arguments that are ``strong'' against the attacks from $\G$, it is used to remove (only as part of the local computation of this procedure) any weak arguments from $\G$ itself (i.e., arguments that are undercut), and any arguments that depend on the former to activate their inferences. This step, then, ensures that all arguments (subsets of $G$) that are defended are no longer part of the revised $G$, in effect accommodating the minimality condition for attacking sets. It then considers all arguments that activate their inferences in the provisional comprehension model. The comprehension model is expanded with a new tuple from $\E$ if the tuple is not in conflict with the story nor in direct conflict with the current model $\D$, and if ``attacked'' by arguments in $\G$ then these arguments do not undercut $\D$, and $\D$ undercuts back. Only arguments coming from the \emph{revised} graph $\G$ are considered, as per the minimality criterion on considered attacks.

\begin{algorithm}[t]
\caption{Elaborating a Comprehension Model}
\label{Algorithm: Elaborating a Comprehension Model}%
\begin{algorithmic}
\STATE \textbf{input:} provisional comprehension model $\P$, graph $\G$, story $\s$; all inputs possibly restricted up to some block.
\REPEAT
    \STATE Let $\G := retract(\P,\G,\s)$.
    \STATE Let $\E$ include all tuples $\tuple{\arg{H}{B},T^h,d;(X,T)}$\\ ~~~~~~such that $\arg{H}{B}$ activates $X$ at $T$ under $\P$.
    \STATE Let $\P := \D$.
    \STATE Let $\P := expand(\D,\E,\G)$.
    \UNTIL{$\D = \P$}
\STATE \textbf{output:} elaborated comprehension model $\D$.
\end{algorithmic}
\end{algorithm}

The elaboration process adds only ``strong'' arguments in the comprehension model, retaining its property as a comprehension model. The discussion above forms the basis for the proof of the following theorem:

\begin{theorem}
Algorithm~\ref{Algorithm: Computing a Comprehension Model} runs in time that is polynomial in the size of $\s$ and the number of time-points of interest, and returns a comprehension model of the story.
\end{theorem}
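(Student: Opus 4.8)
The plan is to prove the two assertions---the polynomial time bound and the claim that the returned model is acceptable---on top of a single structural observation that I would establish first: the set of argument tuples that can be grounded on $\s$, restricted to the time-points of interest, is polynomially bounded. A tuple $\tuple{\arg{H}{B},T^h,d;(X,T)}$ is fixed by a unit argument of $\story$, a head time $T^h$, an inference type $d \in \set{\f,\b}$, and a conclusion $(X,T)$ that is itself determined by these choices (for $d = \f$ we have $X = H$ and $T = T^h$; for $d = \b$, $X$ negates a literal of $B$ and $T$ equals $T^h$ or $T^h - 1$). Hence the number of candidate tuples, and a fortiori the vertex count of every graph $\G[b]$, is at most the number of unit arguments in $\s$ times a low-degree polynomial in the number of time-points. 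Since the edges of $\G[b]$ encode exactly the support relation of the earlier definitions, each $\G[b]$ is a polynomially-sized directed acyclic graph, and this single bound drives both halves of the argument.

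For the running time I would check that each primitive is polynomial. Building $\mathtt{graph}(\G[b-1],\s[b])$ inserts each candidate tuple at most once, testing for every pair $(X,T)$ in its activation condition whether an incoming edge is available from an observation of $\s[b]$ or from a vertex already present---a scan over a polynomial graph. The predicates exercised by \texttt{retract} and \texttt{expand}---conflict and direct conflict with $\story$, direct and indirect conflict between two tuples, undercut, and minimal dispute and defence---are all local tests between pairs, or minimally conflicting subsets, of tuples together with the priority relation $\succ$, each decidable in polynomial time by scanning $\G[b]$. The cascade of induced retractions deletes each tuple at most once. Finally, the repeat loop of Algorithm~\ref{Algorithm: Elaborating a Comprehension Model} is monotone: across iterations the comprehension model grows while the working graph shrinks, and both are bounded by the candidate-tuple count, so the loop reaches its fixpoint after polynomially many passes. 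Summing these costs over the $k$ blocks yields the stated bound.

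For correctness I would carry a loop invariant: at the end of every block the current argument is grounded and acceptable, i.e., it meets the three conditions of the comprehension-model definition. Groundedness is preserved by construction, because a tuple is inserted only once its activation condition is supported, so the order of insertion exhibits the total ordering required for groundedness. Immediately after the \texttt{retract} step, $\P[b]$ is acceptable essentially by fiat: every tuple that $\G[b]$ undercuts, and every tuple that is disputed with no available defence, has been removed, together with every tuple whose activation it had supported, so none of the three conditions can fail. The real content is to show that \texttt{expand} preserves the invariant---each tuple it admits is, by the guard of Algorithm~\ref{Algorithm: Elaborating a Comprehension Model}, neither in conflict with $\story$ nor in direct conflict with the current model, is not undercut by the revised $\G$, and allows the model to undercut back against every dispute, which is exactly the defence requirement.

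The hard part will be justifying that testing acceptability against the single maximal grounded interpretation $\G$ coincides with testing it against \emph{all} arguments, since the definition of acceptability quantifies over the exponentially many grounded interpretations $A$. I would isolate this as a representation lemma: because $\G$ is maximal and because undercut and direct/indirect conflict are defined tuple-by-tuple, every undercut of the model by some argument $A$, and every minimal dispute of the model by some $A$, is already witnessed by individual vertices of $\G$; hence an attack exists at the level of arguments if and only if it exists at the level of the graph. The accompanying delicacy is the minimality clause on attacking sets: I must verify that the inner \texttt{retract} of Algorithm~\ref{Algorithm: Elaborating a Comprehension Model}, which strips from $\G$ exactly the weak subsets that $\P$ already defends against, leaves standing precisely the minimally conflicting subsets that \texttt{expand} is still obliged to defend against. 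Pinning down this correspondence between the global, argument-quantified acceptability conditions and the local, graph-based checks is the crux; once it holds, the invariant closes and the final $\D[k]$ is a comprehension model of $\s$.
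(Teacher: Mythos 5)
Your proposal is correct and follows essentially the same route as the paper's own (very brief) proof sketch: bound the number of candidate tuples and per-step graph operations polynomially for the running time, and argue correctness via the invariant that \texttt{retract} restores acceptability and \texttt{expand} preserves it. You are in fact more explicit than the paper, which dismisses correctness with ``follows from our earlier discussion''; the representation lemma you isolate---that acceptability, though quantified over all grounded interpretations, can be checked against the single maximal graph $\G[b]$---is exactly the step the paper's sketch leaves implicit, so flagging it as the unproven crux is the right instinct.
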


\begin{proof}[Proof sketch]
Correctness follows from our earlier discussion. Regarding running time:
The number of iterations of the top-level algorithm is at most linear in the relevant parameters. In constructing the graph $\G[b]$, each pair of elements (unit arguments or observations at some time-point) in $\s[b]$ is considered once, for a constant number of operations. The same is the case for the retraction process in the subsequent step of the algorithm. Finally, the loop of the elaboration process repeats at most a linear in the relevant parameters number of times, since at least one new tuple is included in $\P$ in every loop. Within each loop, each step considers each pair of elements (unit arguments or observations at some time-point) in $\s[b]$ once, for a constant number of operations.
The claim follows.
\end{proof}

The computational processes presented above have been implemented using Prolog, along with an accompanying high-level language for representing narratives, background knowledge, and multiple-choice questions. Without going into details, the language allows the user to specify a sequence of sessions of the form
\texttt{session(s(B),Qs,Vs)}, where \texttt{B} is the next story block to read, \texttt{Qs} is the set of questions to be answered afterwards, and \texttt{Vs} is the set of fluents made visible in a comprehension model returned to the user.

The narrative itself is represented by a sequence of statements of the form \texttt{s(B) :: X at T}, where \texttt{B} is the block in which the statement belongs (with possibly multiple statements belonging in the same block), \texttt{X} is a fluent or action, and \texttt{T} is the time-point at which it is observed.

The background knowledge is represented by clauses of the form \texttt{p(N) :: A, B, ..., C implies X}  or \texttt{c(N) :: A, B, ..., C causes X}, where \texttt{p} or \texttt{c} shows a property or causal clause, \texttt{N} is the name of the rule, \texttt{A, B, ..., C} is the rule's body, and \texttt{X} is the rule's head. Negations are represented by prefixing a fluent or action in the body or head with the minus symbol. Variables can be used in the fluents or actions to represent relational rules. Preferences between clauses are represented by statements of the form \texttt{p(N1) >> c(N2)} with the natural reading.

Questions are represented by clauses of the form \texttt{q(N) ?? (X1 at T1, ..., X2 at T2) ; ...}, where \texttt{N} is the name of the question, \texttt{(X1 at T1, ..., X2 at T2)} is the first possible answer as a conjunction of fluents or actions that need to hold at their respective time-points, and \texttt{;} separates the answers. The question is always the same: ``Which of the following choices is the case?''.

The implemented system demonstrates real modularity and elaboration tolerance, allowing as input any story narrative or background knowledge in the given syntax, always appropriately qualifying the given information to compute a comprehension model. The system is available at \texttt{http://cognition.ouc.ac.cy/narrative/}.

\section{Evaluation through Empirical Studies}

In the first part of the evaluation of our approach we carried a \emph{psychological study} to ascertain the world knowledge that is activated to successfully comprehend example stories such as our example story on the basis of data obtained from human readers. We were interested both in the outcomes of successful comprehension and the world knowledge that contributed to the human comprehension.
We developed a set of inferential questions to follow the reading of pre-specified story segments. These assessed the extent to which readers connected, explained, and elaborated key story elements. Readers were instructed to answer each question and to justify their answers using a ``think-aloud'' method of answering questions while reading in order to reveal the world knowledge that they had used.

The qualitative data from the readers was pooled together and analysed as to the frequencies of the types of responses in conjunction with the information given in justifications and think-aloud protocols.
For example, the data indicated that all readers considered Papa Joe to be living on a farm or in a village (q.01, ``Where does Papa Joe live?'') and that all readers attributed an intention of Papa Joe to hunt
(q.06, ``What was Papa Joe doing in the forest?'').
An interesting example of variability occurred in the answers
for the group of questions 07,08,10,11, asking about the status of the turkeys
at various stages in the story.
The majority of participants followed a comprehension model which
was revised between the first turkey being dead and alive.
However, a minority of participants consistently answered that both turkeys were alive. These readers had defeated the causal arguments that supported the inference that the first turkey was dead, perhaps based
on an expectation that the desire of the protagonist
for turkey would be met with complications.
We believe that such expectations can be generated from
standard \emph{story knowledge} in the same way as we draw other elaborative inferences from WK.

\subsection{Evaluation of the system}

\def\push{${}$~~~~~~~~~~~~~}

Using the empirical data discussed above, we tested our framework's ability to capture the majority answers and account for their variability. The parts of our example story representation relevant to questions 01 and 06 are as follows:

{\small
\begin{quote}
\begin{tabular}{ l l }
s(1) :: night at 0.              &     s(2) :: animal(turkey2) at 2.\\
s(1) :: xmasEve at 0.            &     s(2) :: alive(turkey1) at 2.\\
s(1) :: clean(pj,barn) at 0.     &     s(2) :: alive(turkey2) at 2.\\
s(2) :: xmasDay at 1.            &     s(2) :: chirp(bird) at 2.\\
s(2) :: gun(pjGun) at 1.         &     s(2) :: nearby(bird) at 2.\\
s(2) :: longWalk(pj) at 1.       &     s(2) :: aim(pjGun,turkey1) at 2.\\
s(2) :: animal(turkey1) at 2.    &     s(2) :: pulltrigger(pjGun) at 2.\\
\end{tabular}
\end{quote}
}

The two questions are answered after reading, respectively, the first and second blocks of the story above:

{\small
\begin{quote}
\begin{tabular}{ l l }
session(s(1),[q(01)],\_).   &      session(s(2),[q(06)],\_).
\end{tabular}
\end{quote}
}

\noindent with their corresponding multiple-choice answers being:

{\small
\begin{quote}
\begin{tabular}{ l l }
q(01) ?? & {\ }\\
lives(pj,city) at 0;    &    lives(pj,hotel) at 0;\\
lives(pj,farm) at 0;     &    lives(pj,village) at 0.\\
\end{tabular}

\begin{tabular}{ l l }
q(06) ?? & {\ }\\
motive(in(pj,forest),practiceShoot) at 3; & {\ }\\
motive(in(pj,forest),huntFor(food)) at 3; & {\ }\\
(motive(in(pj,forest),catch(turkey1)) at 3, & {\ }\\
\push motive(in(pj,forest),catch(turkey2)) at 3); & {\ }\\
motive(in(pj,forest),hearBirdsChirp) at 3. & {\ }\\
\end{tabular}
\end{quote}
}

To answer the first question, the system uses the following background knowledge:

{\small
\begin{quote}
\begin{tabular}{ l l }
p(11) :: has(home(pj),barn) implies lives(pj,countrySide). & {\ }\\
p(12) :: true implies -lives(pj,hotel). & {\ }\\
p(13) :: true implies lives(pj,city). & {\ }\\
p(14) :: has(home(pj),barn) implies -lives(pj,city). & {\ }\\
p(15) :: clean(pj,barn) implies at(pj,barn). & {\ }\\
p(16) :: at(pj,home), at(pj,barn) implies has(home(pj),barn). & {\ }\\
p(17) :: xmasEve, night implies at(pj,home). & {\ }\\
p(18) :: working(pj) implies -at(pj,home). & {\ }\\

p(111) :: lives(pj,countrySide) implies lives(pj,village). & {\ }\\
p(112) :: lives(pj,countrySide) implies lives(pj,farm). & {\ }\\
p(113) :: lives(pj,village) implies -lives(pj,farm). & {\ }\\
p(114) :: lives(pj,farm) implies -lives(pj,village). & {\ }\\
\end{tabular}

\begin{tabular}{ l l }
p(14) $>>$ p(13). &    p(18) $>>$ p(17).
\end{tabular}
\end{quote}
}

By the story information, p(17) implies at(pj,home), without being attacked by p(18), since nothing is said in the story about Papa Joe working. Also by the story information, p(15) implies at(pj,barn). Combining the inferences from above, p(16) implies has(home(pj),barn), and p(11) implies lives(pj,countrySide). p(12) immediately dismisses the case of living in a hotel (as people usually do not), whereas p(14) overrides p(13) and dismisses the case of living in the city. Yet, the background knowledge cannot unambiguously derive one of the remaining two answers. In fact, p(111), p(112), p(113), p(114) give arguments for either of the two choices. This is in line with the variability in the empirical data in terms of human answers to the first question.

To answer the second question, the system uses the following background knowledge:

{\small
\begin{quote}
p(21) :: want(pj,foodFor(dinner)) implies\\
\push motive(in(pj,forest),huntFor(food)).

p(22) :: hunter(pj) implies\\
\push motive(in(pj,forest),huntFor(food)).

p(23) :: firedat(pjGun,X), animal(X) implies\\
\push -motive(in(pj,forest),catch(X)).

p(24) :: firedat(pjGun,X), animal(X) implies\\
\push -motive(in(pj,forest),hearBirdsChirp).

p(25) :: xmasDay implies\\
\push want(pj,foodFor(dinner)).

p(26) :: longWalk(pj) implies\\
\push -motive(in(pj,forest),practiceShooting).

p(27) :: xmasDay implies\\
\push -motive(in(pj,forest),practiceShooting).
\end{quote}
}

By the story information and parts of the background knowledge not shown above, we can derive that Papa Joe is a hunter, and that he has fired at a turkey. From the first inference, p(22) already implies that the motivation is to hunt for food. The same inference can be derived by p(25) and p(21), although for a different reason. At the same time, p(23) and p(24) dismiss the possibility of the motivation being to catch the two turkeys or to hear birds chirp, whereas story information along with either p(26) or p(27) dismiss also the possibility of the motivation being to practice shooting.

The background knowledge above follows evidence from the participant responses in our psychological study that the motives in the answers of the second question can be ``derived'' from higher-level desires or goals of the actor. Such high-level desires and intentions are examples of \emph{generalizations} that
contribute to the coherence of comprehension, and to the creation of
\emph{expectations} in readers about the course of action
that the story might follow in relation to
fulfilling desires and
achieving intentions of the protagonists.

\section{Related Work}

Automated story understanding has been an ongoing field of AI research
for the last forty years, starting with the planning and goal-oriented approaches of
Schank, Abelson, Dyer and others~\cite{SchankAbelson1977,Dyer1983}; for a
good overview see~\cite{Mueller2002} and the website~\cite{MuellerWebsite}.
Logic-related approaches have largely been concerned with the development
of appropriate representations, translations or annotations of narratives,
with the implicit or explicit assumption that standard deduction or logical
reasoning techniques can subsequently be applied to these. For example,
the work of Mueller~\cite{Mueller2003}, which in terms of story
representation is most closely related to our approach,
equates various modes of story understanding with the solving of
satisfiability problems. \cite{NiehausYoung2009} models understanding as
partial order planning, and is also of interest here because of a
methodology that includes a controlled comparison with human readers.

To our knowledge there has been very little work relating story comprehension
with computational argumentation, an exception being~\cite{BexV13},
in which a case is made for combining narrative and argumentation techniques
in the context of legal reasoning, and with which our argumentation framework shares important similarities. Argumentation for reasoning about actions
and change, on which our formal framework builds,
has been studied in \cite{VoF05,MichaelK09}.

Many other authors have emphasized the
importance of commonsense knowledge and reasoning in story comprehension~\cite{SilvaMontgomery1978,DahlgrenEtAl1989,Riloff1999,Mueller2004,Mueller2009,Verheij2009,ElsonMcKeown2009,Michael2010}, and indeed how it
can offer a basis for story comprehension tasks beyond question
answering \cite{Michael_(2013)_StoryCalculemus}.

\section{Conclusions and Future Work}

We have set up a conceptual framework for story comprehension by fusing together
knowhow from the psychology of text comprehension with established AI techniques and
theory in the areas of Reasoning about Actions and Change and Argumentation.
We have developed a proof of concept automated system to
evaluate the applicability of our framework through a similar
empirical process of evaluating human readers.
We are currently, carrying out psychological
experiments with other stories to harness
world knowledge and test our
system against the human readers.

There are still several problems that we need to
address to complete a fully automated approach to
SC, over and above the problem of extracting
through Natural Language Processing techniques
the narrative from the free format text.
Two major such problems for our
immediate future work are (a) to address further
the computational aspects of the
challenges of cognitive economy and coherence
and (b) the systematic extraction or acquisition
of common sense world knowledge. For the first
of these we will investigate how this can be addressed
by applying ``computational heuristics'' on top of
(and without the need to reexamine) the solid semantic
framework that we have developed thus far, drawing
again from psychology to formulate such heuristics.
In particular, we expect that the psychological studies will
guide us in modularly introducing computational operators
such as \emph{selection, dropping and generalization operators}
so that we can improve the coherence of the computed models.

For the problem of the systematic acquisition
of world knowledge we aim to source this
(semi)-automatically from the Web.
For this we could build on lexical databases such as
WordNet \cite{Miller1995}, FrameNet \cite{Baker1998},
and PropBank \cite{Palmer2005}, exploring the
possibility of populating the world knowledge
theories using archives for common sense knowledge
(e.g., Cyc \cite{Lenat95}) or through the automated extraction
of commonsense knowledge from text using natural language processing
\cite{MichaelEtAl_(2008)_ExperimentalKI}, and appealing to textual
entailment for the semantics of the extracted knowledge \cite{Michael_(2009)_RBTL,Michael_(2013)_MachinesWithWebsense}.

We envisage that the strong inter-disciplinary nature of our work can
provide a concrete and important test bed for evaluating the development
of NMR frameworks in AI while at the same time offering valuable
feedback for Psychology.

\bibliographystyle{aaai}
\bibliography{NMRandStoryComprehension}
\end{document}